\relax
\documentclass[letterpaper]{article} 
\usepackage{aaai22}  
\usepackage{times}  
\usepackage{helvet}  
\usepackage{courier}  
\usepackage[hyphens]{url}  
\usepackage{graphicx} 
\urlstyle{rm} 
\usepackage{natbib}  
\usepackage{caption} 
\DeclareCaptionStyle{ruled}{labelfont=normalfont,labelsep=colon,strut=off} 
\frenchspacing  
\setlength{\pdfpagewidth}{8.5in}  
\setlength{\pdfpageheight}{11in}  
%

\usepackage[ruled,noend]{algorithm2e}

\usepackage[noend]{algpseudocode}
\usepackage{xcolor}
\usepackage{amssymb}
\usepackage{amsfonts,amsmath,bm}
\usepackage{amsthm}
\usepackage{subfig}
\usepackage{enumitem}

\newtheorem{example}{Example}
\newtheorem{definition}{Definition}
\newtheorem{property}{Property}
\newtheorem{lemma}{Lemma}
\newtheorem{theorem}{Theorem}

\newcommand{\Open}{\ensuremath{Open}\xspace}

\newcommand{\Solutions}{\ensuremath{S}\xspace}
\newcommand{\comax}{\ensuremath{\mathbf{comax}}}
\newcommand{\ND}{\ensuremath{\mathbf{ND}}}

\newcommand{\Piset}{\Pi}

%
\usepackage{newfloat}
\usepackage{listings}
\lstset{%
	basicstyle={\footnotesize\ttfamily},
	numbers=left,numberstyle=\footnotesize,xleftmargin=2em,
	aboveskip=0pt,belowskip=0pt,%
	showstringspaces=false,tabsize=2,breaklines=true}

%
%
\pdfinfo{
/Title (AAAI Press Formatting Instructions for Authors Using LaTeX -- A Guide)
/Author (AAAI Press Staff, Pater Patel Schneider, Sunil Issar, J. Scott Penberthy, George Ferguson, Hans Guesgen, Francisco Cruz, Marc Pujol-Gonzalez)
/TemplateVersion (2022.1)
}

\setcounter{secnumdepth}{0} 

%


\title{Cost Splitting for Multi-Objective Conflict-Based Search}

\title{Cost Splitting for Multi-Objective Conflict-Based Search}
\author {
    Cheng Ge,\textsuperscript{\rm 1 *}
    Han Zhang, \textsuperscript{\rm 2} \footnote{These authors contributed equally to this work and should be considered co-first authors.}
    Jiaoyang Li, \textsuperscript{\rm 3}
    Sven Koenig, \textsuperscript{\rm 2}
}
\affiliations {
    \textsuperscript{\rm 1} Institute for Interdisciplinary Information Science, Tsinghua University\\
    \textsuperscript{\rm 2} Computer Science Department, University of Southern California\\
    \textsuperscript{\rm 3} Robotics Institute at Carnegie Mellon University\\    
    ge\_mike@outlook.com, \{zhan645, skoenig\}@usc.edu,  jiaoyangli@cmu.edu
}

\usepackage{bibentry}

\begin{document}

\maketitle

\begin{abstract}
The Multi-Objective Multi-Agent Path Finding (MO-MAPF) problem is the problem of finding the Pareto-optimal frontier of collision-free paths for a team of agents while minimizing multiple cost metrics. 
Examples of such cost metrics include arrival times, travel distances, and energy consumption.
In this paper, we focus on the Multi-Objective Conflict-Based Search (MO-CBS) algorithm, a state-of-the-art MO-MAPF algorithm.
We show that the standard splitting strategy used by MO-CBS can lead to duplicate search nodes and hence can duplicate the search effort that MO-CBS needs to make.
To address this issue, we propose two new splitting strategies for MO-CBS, namely cost splitting and disjoint cost splitting.
Our theoretical results show that, when combined with either of these two new splitting strategies, MO-CBS maintains its completeness and optimality guarantees. Our experimental results show that disjoint cost splitting, our best splitting strategy, speeds up MO-CBS by up to two orders of magnitude and substantially improves its success rates in various settings.

\end{abstract}

\section{Introduction}

The Multi-Agent Path Finding (MAPF) problem is the problem of finding a set of collision-free paths for a team of agents. 
It is related to many real-world applications~\cite{wurman2008coordinating,morris2016planning}. 
Solving it optimally is known to be NP-hard for various objective functions~\cite{yu2013structure,ma2016multi}. 
In this paper, we focus on a variant of the MAPF problem called the Multi-Objective MAPF (MO-MAPF) problem~\cite{ren2021multi}. 
In MO-MAPF, a solution is a set of collision-free paths for all agents, and we consider multiple cost metrics for each solution.
Such cost metrics can be arrival times, travel distances, and other domain-specific metrics. 
The objective of MO-MAPF is to find the Pareto-optimal solution set, that is, all solutions that are not dominated by any other solutions, where a solution $\Pi$ dominates another solution $\Pi'$ iff the cost of $\Pi$ is no larger than the cost of $\Pi'$ for every cost metric and at least one of them is smaller.

There are two existing algorithms for solving the MO-MAPF problem. MO-M*~\cite{ren2021subdimensional} generalizes M*~\cite{wagner2015subdimensional}, an algorithm for solving MAPF optimally, to MO-MAPF. However, experimental results~\cite{ren2021multi} show that MO-M* runs slower than Multi-Objective Conflict-Based Search (MO-CBS), the other MO-MAPF algorithm, in various domains.

MO-CBS~\cite{ren2021multi} generalizes Conflict-Based Search (CBS) \cite{sharon2015conflict}, which is also an algorithm for solving MAPF optimally, to MO-MAPF.
Here, conflicts refer to collisions between agents.
In MAPF, CBS first finds a minimum-cost path for each agent without considering conflicts. 
To resolve a conflict, CBS branches into two subproblems, each with a new constraint imposed on either one of the conflicting agents to prevent the conflict from happening again. It replans for the constrained agent in each subproblem by finding a minimum-cost path that satisfies all constraints in the subproblem.
CBS repeats the conflict resolution process until it finds a solution.
In MO-MAPF, MO-CBS is similar to CBS and resolves conflicts by adding constraints to agents.
One difference between them is that, when replanning for the constrained agent, MO-CBS finds all Pareto-optimal paths that satisfy the constraints and creates a new subproblem for each of them. 
The resultant subproblems have the same set of constraints and differ from each other only in terms of the path of the constrained agent.
As we will show in the paper, these similar subproblems can lead to substantial duplicate search effort. 


In this paper, we dramatically speed up MO-CBS by improving its splitting strategy. 
We propose two new splitting strategies for MO-CBS, namely cost splitting and disjoint cost splitting. 
Both cost splitting and disjoint cost splitting impose additional constraints on the costs of the paths for the agents during splitting.
Therefore, each subproblem is more constrained, and, when resolving conflicts in its following subproblems, MO-CBS generates fewer subproblems and hence has less duplicate search effort. 
Our theoretical results show that, when combined with either of these two new splitting strategies, MO-CBS maintains its completeness and optimality.
Our experimental results show that these two proposed techniques substantially improve the success rates and runtimes of MO-CBS in various domains. In many instances, the runtime speedups are more than $25\times$ and the maximum runtime speedup is more than $125\times$.

\section{Terminology and Problem Definition}

In this paper, we use boldface fonts to denote vectors, sets of vectors, or vector functions. We use $v_i$ to denote the $i$-th component of vector or vector function $\mathbf{v}$. Given two vectors  $\mathbf{v}$ and $\mathbf{v'}$ of the same length $N$, their addition is defined as $\mathbf{v} + \mathbf{v'} = [v_1 + v'_1, v_2 + v'_2\ldots v_N + v'_N]$, and their component-wise maximum is defined as $\comax(\mathbf{v}, \mathbf{v'}) =  [\max(v_1, v_1')\ldots \max(v_N, v_N')]$.
We say that $\mathbf{v}$ \emph{weakly dominates} $\mathbf{v'}$, that is, $\mathbf{v} \preceq \mathbf{v'}$, iff $v_i \leq v'_i$ for all $i=1,2\ldots N$. 
We say that $\mathbf{v}$ \emph{dominates} $\mathbf{v'}$, that is, $\mathbf{v}\prec \mathbf{v'}$, iff $\mathbf{v} \preceq \mathbf{v'}$ and there exists an $i\in \{1,2\ldots N\}$ with $v_i < v'_i$.
Given a set of vectors $\mathbf{S}$, we use $\ND(\mathbf{S})= \{\mathbf{v} \in \mathbf{S} \mid \forall \mathbf{u} \in \mathbf{S} , \mathbf{u} \not\prec \mathbf{v}\}$ to denote the set of all nondominated vectors in $\mathbf{S}$. 
\begin{property}
Given three vectors $\mathbf{v}_1$, $\mathbf{v}_2$, and $\mathbf{u}$, we have $\comax(\mathbf{v}_1, \mathbf{v}_2) \preceq \mathbf{u}$ iff $\mathbf{v}_1 \preceq \mathbf{u}$ and $\mathbf{v}_2 \preceq \mathbf{u}$
\label{prop0}
\end{property}

The MO-MAPF problem is defined by a weighted directed graph $G = \langle V,E \rangle$ and a set of $m$ agents $\{a_1 \ldots a_m\}$, each $a_i$ associated with a start vertex $s_i \in V$ and a goal vertex $g_i \in V$. $V$ is the set of vertices that the agents can stay at, and $E$ is the set of directed edges that the agents can move along, each associated with an edge cost, which is a non-negative vector of length $N$. An edge pointing from $u \in V$ to $v \in V$ is denoted as $\langle u, v\rangle \in E$. 
Additionally, we allow a self-pointing edge at each vertex, which corresponds to the case when the agent waits at the current vertex. 

In each timestep, an agent either moves along an edge (moving to a neighboring vertex or waiting at its current vertex) or terminates at its goal vertex (and does not move any longer).
The cost of an action is the cost of the corresponding edge if the action is to move along an edge and a zero vector of length $N$ if it is a terminate action.
A \emph{path} $\pi$ of an agent is a sequence of actions that leads it from its start vertex to its goal vertex and ends with a terminate action. 
Its \emph{cost} $\mathbf{c}(\pi)$ is the sum of the costs of its actions.
There are two types of conflicts.
A \emph{vertex conflict} happens when two agents stay at the same vertex simultaneously, and an \emph{edge conflict} happens when two agents swap their vertices simultaneously.
A \emph{solution} $\Pi$ is a set of conflict-free paths of all agents. Its \emph{cost} $\mathbf{c}(\Pi)$ is the sum of the costs of its paths. 

Given a MO-MAPF problem instance (respectively a MO-MAPF problem instance and an agent), a solution (respectively path) is \emph{Pareto-optimal} iff its cost is not dominated by the cost of any other solution (respectively path); 
the \emph{Pareto-optimal frontier} is the set of all Pareto-optimal solutions (respectively paths); and a \emph{cost-unique Pareto-optimal frontier} is a maximal subset of the Pareto-optimal frontier that does not contain any two solutions (respectively paths) of the same cost.

The goal of the MO-MAPF problem is to find a cost-unique Pareto-optimal frontier (of solutions).
We can easily fit different objectives into this problem by assigning different edge costs. 
For example, we can measure \emph{flowtime}, which is commonly used as a MAPF optimization objective, as the $i$-th objective of the problem instance by setting the $i$-th digit of the cost of every edge to one.

\section{Algorithmic Background}

We review two existing algorithms on which we build our techniques, namely CBS for the MAPF problem and MO-CBS for the MO-MAPF problem. 

\subsection{CBS}

CBS is a complete and optimal two-level MAPF algorithm.
On the high level, CBS performs a best-first search on a \emph{Constraint Tree} (CT).
Each CT node $n$ contains a set of constraints $n.constraints$ and a set of paths $n.paths$, one for each agent, that satisfy the constraints. A \textit{vertex constraint } $\langle a_i, v, t\rangle$ prohibits agent $a_i$ from using vertex $v$ at timestep $t$, and an \textit{edge constraint } $\langle a_i, u, v, t\rangle$ prohibits agent $a_i$ from using edge $\langle u, v\rangle$ between timesteps $t$ and $t+1$.
The cost of a CT node is the sum of the costs of its paths.

CBS starts with a root CT node, which has an empty set of constraints and a minimum-cost path for each agent that ignores conflicts.
When expanding a CT node, CBS returns its paths as a solution if they are conflict-free.
Otherwise, CBS picks a conflict to resolve: It splits the CT node into two child CT nodes and adds a constraint to each child CT node to prohibit either one or the other of the two conflicting agents from using the conflicting vertex or edge at the conflicting timestep. 
CBS then calls its low level to replan the path of the newly constrained agent in each child CT node.
On the low level, CBS finds a minimum-cost path for the given agent that satisfies the constraints of the given CT node but ignores the conflicts with other agents.

\begin{algorithm}[t!]
    \footnotesize
   
\DontPrintSemicolon
\SetKwProg{Fn}{Function}{:}{end}
    \label{high_level}

            \textsc{Initialization()}\; \label{line:init}
            \Solutions $\gets \emptyset$\tcp*{Stores the solutions} \label{line:maintain_solution}
            \While{\Open is not empty}{
                $n \gets \Open.pop()$ \; \label{line:pop_node}
                \lIf{$\exists \ \Pi \in \Solutions, \mathbf{c}(\Pi) \preceq \mathbf{c}(n.paths)$
                \label{line:prune_pop}}{%
                    \textbf{continue}%
                }
                \If{$n$ is conflict-free}{
                     add $n.paths$ to \Solutions\; \label{line:add_solution}
                    \textbf{continue}
                    }
            
                $\mathit{conf} \gets $ a conflict in $n$\; \label{line:start_expand} \label{line:resolve_conflict}
                $children \gets \textsc{Split}(n, \mathit{conf} )$\; \label{line:get_child_node}
                 \ForEach{ $n'\in children$}{
                 \lIf{$\exists \ \Pi \in \Solutions, \mathbf{c}(\Pi) \preceq \mathbf{c}(n.paths)$
                 \label{line:prune_add}}{%
                            \textbf{continue}%
                        }
                 add $n'$ to \Open\; \label{line:end_expand}
                 }

            }
             \Return \Solutions\;
             
    \Fn{\textsc{Split}$(n, \mathit{conf})$}{
        $children \gets \emptyset$\;
         \ForEach{ $a_i$ involved in $\mathit{conf}$ \label{line:high_level_split}}{
            $\mathit{cons} \gets$ the constraint imposed on $a_i$\;
            $C'_i \gets n.constraints \cup \{\mathit{cons}\}$\;
            $\Piset'_i \gets \textsc{LowLevelSearch}(a_i, C'_i)$\; \label{line:low_level_search}
            \ForEach{ $\pi_i\in \Piset'_i$\label{line:new_child_node}}{
                $n' \gets n$\;
                $n'.constraints\gets C'_i$\;
             $n'.paths[i]\gets \pi_i$\;
                add $n'$ to $children$\; 
            }
        }
    \Return $children$\;
    }
\caption{MO-CBS with standard splitting}
\label{alg:mo-cbs}
\end{algorithm}

\subsection{MO-CBS}

MO-CBS extends CBS to the MO-MAPF problem. 
Algorithm~\ref{alg:mo-cbs} shows its high level.
MO-CBS first calls function \textsc{Initialization} to generate (potentially multiple) root CT nodes. Specifically, MO-CBS calls its low level to find a cost-unique Pareto-optimal frontier $\Piset_i$ for each agent $a_i$.
For each combination of paths in $\Piset_1\times\Piset_2\ldots\Piset_m$, MO-CBS generates a root CT node that has an empty set of constraints and the corresponding path for each agent and inserts the root CT node into the $Open$ list. In each iteration, a CT node with the lexicographically smallest cost is extracted from $Open$ for expansion.

Similar to CBS, when expanding a CT node $n$ whose paths are not conflict-free, MO-CBS picks a conflict to resolve.
The splitting strategy of MO-CBS is two-level.
MO-CBS first splits a CT node into two subproblems, each with a new constraint on either one of the conflicting agents. Then, for each constraint, MO-CBS calls its low level to replan a cost-unique Pareto-optimal frontier for the constrained agent. For each path in it, MO-CBS generates a new child CT node. 
When expanding a CT node whose paths are conflict-free, MO-CBS finds a new solution and adds it to the solution set.
MO-CBS prunes a CT node if its cost is weakly dominated by the cost of any solution in the solution set.
MO-CBS terminates when $Open$ is empty.

\begin{figure*}[t]
    \centering
      \subfloat[]{
    \includegraphics[width=0.3\textwidth]{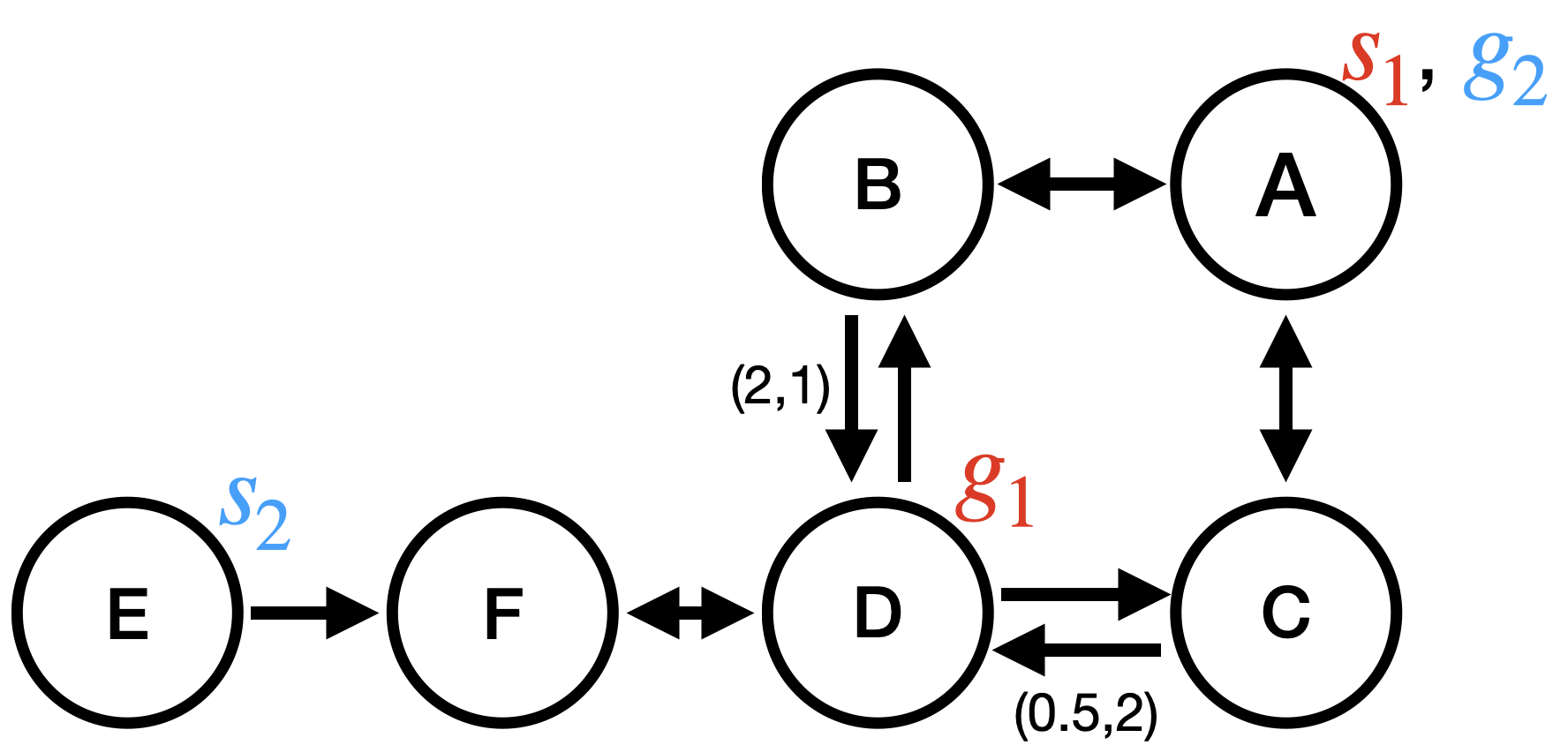}  
      }
    \subfloat[]{
    \includegraphics[width=0.6\textwidth]{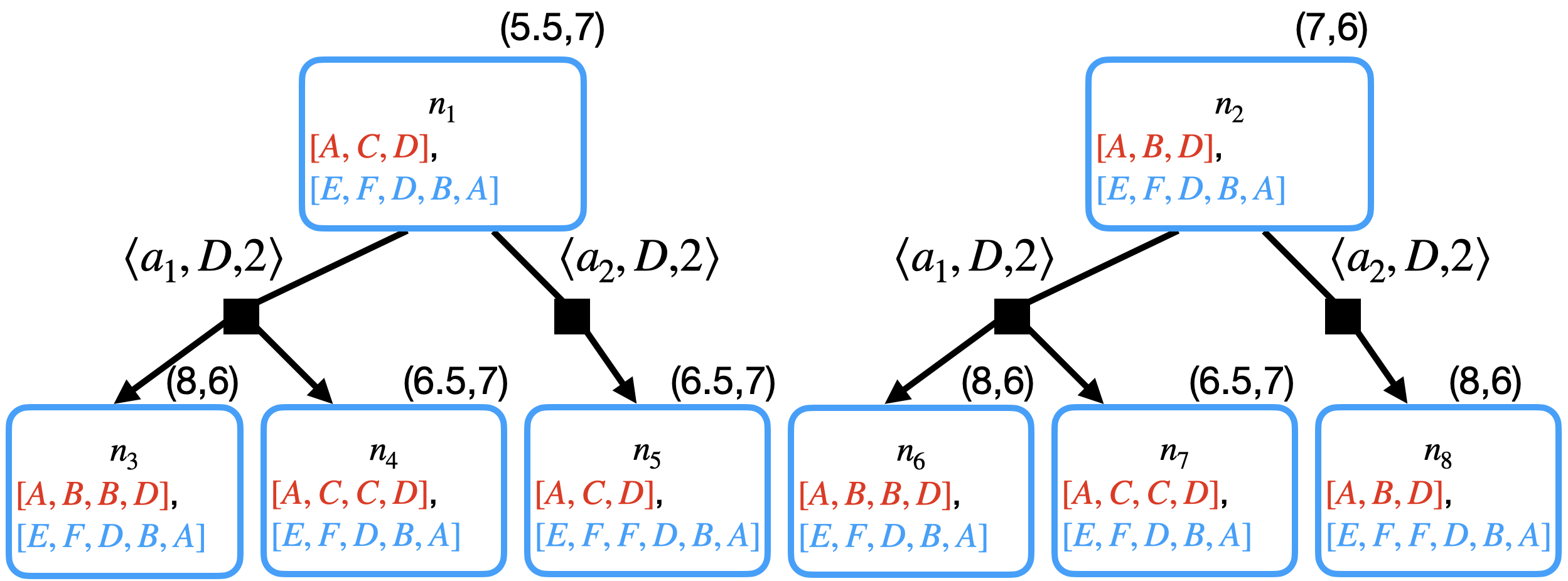}  
    \label{fig:example_ct}
      }
    \caption{An example MO-MAPF instance with two agents. (a) shows the graph for this problem instance. 
    The self-pointing edges are not shown, and all of them have costs $(1,0)$.
    Except $\mathbf{c}(\langle B,D \rangle) = (2,1) $ and  $\mathbf{c}(\langle C,D \rangle) = (0.5,2)$, all shown edges have costs $(1,1)$.
    The start and goal vertices of agent $a_1$ are A and D, respectively.
    The start and goal vertices of agent $a_2$ are E and A, respectively.
    (b) shows the CTs of MO-CBS for this MO-MAPF instance. Blue boxes represent CT nodes.
    The label inside each CT node denotes the name of the CT node and the paths for agents $a_1$ (in red) and $a_2$ (in blue).
    For ease of presentation, we show each path as the sequence of vertices that are traversed.
    The label next to each CT node is its cost.}
    \label{fig:example} 
\end{figure*}

\begin{example}
Figure~\ref{fig:example} shows an MO-MAPF instance with two agents. 
In the beginning, a cost-unique Pareto-optimal frontier for agent $a_1$ contains paths $[A,C,D]$ and $[A,B,D]$ with costs $(1.5,3)$ and $(3,2)$, respectively.
A cost unique Pareto-optimal frontier for agent $a_2$ contains only one path $[E,F,D, B, A]$ with cost $(4,4)$.
Therefore, MO-CBS generate two root CT nodes $n_1$ and $n_2$. 
We assume that MO-CBS breaks ties in favor of CT nodes that are generated earlier when extracting a CT node to expand.
\begin{enumerate}

\item  MO-CBS first expands CT node $n_1$ and picks the vertex conflict between agents $a_1$ and $a_2$ at vertex $D$ at timestep 2 to resolve. 
MO-CBS splits $n_1$ with vertex constraints $\langle a_1, D, 2\rangle$ and $\langle a_2, D, 2\rangle$. For agent $a_1$, MO-CBS finds a cost-unique Pareto-optimal frontier that consists of two paths $[A,B,B,D]$ and $[A,C,C,D]$ and thus generates two child CT nodes $n_3$ and $n_4$, one for each path.
For agent $a_2$, MO-CBS finds a cost-unique Pareto-optimal frontier consists of only one path $[E,F, F,D,B, A]$ and generates child CT node $n_5$.
\item MO-CBS expands CT node $n_4$ and finds a Pareto-optimal solution with cost $(6.5,7)$. 
\item MO-CBS extracts and then prunes CT node $n_5$ because its cost is weakly dominated by the cost of the solution found in CT node $n_4$.
\item  MO-CBS expands CT node $n_2$ and, similar to the expansion of $n_1$, generates three CT nodes $n_6$, $n_7$, and $n_8$.
\item MO-CBS extracts and then prunes CT node $n_7$ because its cost is weakly dominated by the cost of the solution found in CT node $n_4$.
\item MO-CBS expands CT node $n_3$ and finds a Pareto-optimal solution with cost $(8,6)$. 
\item MO-CBS extracts and then prunes CT nodes $n_6$ and $n_8$ because their costs are both weakly dominated by the cost of the solution found in CT node $n_3$.
\end{enumerate}
MO-CBS terminates and finds a cost-unique Pareto-optimal frontier with two solutions.
\label{example:initial}
\end{example}

\section{Cost Splitting}


The following example shows that MO-CBS sometimes generates identical CT nodes.

\begin{example}
Consider CT nodes $n_3$ and $n_6$ in Figure~\ref{fig:example_ct}. Both CT nodes contain the same sets of constraints and paths, which makes these two CT nodes indistinguishable from each other.
CT nodes $n_4$ and $n_7$ are indistinguishable from each other, too.
In our example, CT nodes $n_3$ and $n_6$ are both conflict-free. However, for an MO-MAPF instance with more agents, such duplicate CT nodes can contain conflicts between other agents, and, to resolve these conflicts, MO-CBS repeats the search effort in the search trees that are rooted in these duplicate CT nodes.
\end{example}

We now describe cost splitting, which generates fewer CT nodes than the standard splitting and still retains the completeness and optimality guarantees of MO-CBS.
For each CT node $n$, MO-CBS with cost splitting maintains a \emph{cost lower bound} $n.\mathbf{lb}=[n.\mathbf{lb}_1\ldots n.\mathbf{lb}_m]$ that consists of $m$ vectors of length $N$, one for each agent. It also modifies the \textsc{Initialization} and \textsc{Split} functions used by Algorithm~\ref{alg:mo-cbs}. 
\begin{definition}
A path $\pi_i$ of agent $a_i$ is compatible with a CT node $n$ iff:
\begin{enumerate}
    \item $n.\mathbf{lb}_i \preceq \mathbf{c}(\pi_i)$ and
    \item $\pi_i$ satisfies all constraints of CT node $n$.
\end{enumerate}
A solution is compatible with a CT node iff all its paths are compatible with the CT node.
\label{def:compatible_old}
\end{definition}

In \textsc{Initialization}, MO-CBS with cost splitting initializes the cost lower bound of each agent in each root CT node to the cost of its path.
For example, the cost lower bound of CT nodes $n_1$ and $n_2$ in Figure~\ref{fig:example_ct} are initialized to $[(1.5, 3), (4,4)]$ and $[(3, 2), (4,4)]$, respectively. 

\begin{property}
Any solution is compatible with at least one root CT node.
\label{prop1_5}
\end{property}
\begin{proof}
Any combination of the cost-unique Pareto-optimal frontiers of all agents corresponds to a root CT node, so any solution is weakly dominated by the paths and thus the cost lower bounds of at least one root CT node.
Since root CT nodes do not have any constraints, the statement holds. 
\end{proof}
\begin{algorithm}[t]
    \footnotesize

    \Fn{\textsc{Split}$(n, \mathit{conf})$}{
        $children \gets \emptyset$\;
         \ForEach{ $a_i$ involved in $\mathit{conf}$}{
            $\mathit{cons} \gets$ the constraint imposed on $a_i$\;
            $C'_i \gets n.constraints \cup \{\mathit{cons}\}$ \label{line:constraint-union}\;
            $\Piset'_i \gets \textsc{LowLevelSearch}(a_i, C'_i)$ \label{line:cost_split_lowlevel} \;
            $\mathbf{LB}_i \gets \ND(\{\comax(n.\mathbf{lb}_i, \mathbf{c}(\pi)) \mid \mathbf{\pi} \in \Piset_i'\})$\;\label{line:LBi}
            \ForEach{ $\mathbf{lb}\in \mathbf{LB}_i$}{
                $n' \gets n$\;
                $n'.constraints\gets C'_i$\;
                $n'.\mathbf{lb}_i \gets \mathbf{lb}$\; \label{line:set_lower_bound}
             $n'.paths[i]\gets$ a path $ \pi\in \Piset'_i$ such that $ \comax(n.\mathbf{lb}_i, \mathbf{c}(\pi)) = \mathbf{lb}$\;\label{line:set_path}
                add $n'$ to $children$\;
            }
        }
    \Return $children$\;
    }
\caption{Cost splitting}
\label{alg:cost-split}
\end{algorithm}

Algorithm~\ref{alg:cost-split} describes the $\textsc{Split}$ function of cost splitting.
For each agent $a_i$ involved in the conflict and the corresponding constraint $cons_i$, cost splitting calls its low level to replan a cost-unique Pareto-optimal frontier $\Piset_i'$ that satisfies constraints $n.constraints \cup \{ cons_i\}$.
It then computes a set of vectors
$$\mathbf{LB}_i = \ND(\{\comax(n.\mathbf{lb}_i, \mathbf{c}(\pi))| \mathbf{\pi} \in \Piset_i'\}).$$
For each cost vector $\mathbf{lb}\in \mathbf{LB}_i$, cost splitting creates a child CT node whose cost lower bound for agent $a_i$ is set to $\mathbf{lb}$, and whose path of agent $a_i$ is set to the corresponding path in $\Piset_i'$.
We define the split strategy in this way because (1) it guarantees that MO-CBS does not exclude any compatible solution during CT node splitting, as indicated by Property~\ref{prop2}, and (2) it eliminates some duplicate search effort that the standard split strategy may spend, as shown in Example~\ref{example:cost-split}. 
\begin{property}
Any solution that is compatible with a CT node is compatible with at least one of its child CT nodes.
\label{prop2}
\end{property}
\begin{proof}
The two constraints for resolving a conflict ensure that every solution satisfies at least one of them (because, otherwise, the solution is not conflict-free). Therefore, we only need to prove that any solution $\Pi$ that is compatible with CT node $n$ and satisfies the new constraint imposed on agent $a_i$ is compatible with at least one child CT node in $nodes_i$, where $nodes_i$ is the set of child CT nodes generated after replanning for agent $a_i$. Let $\pi_i$ be the path of agent $a_i$ in solution $\Pi$. Because every CT node in $nodes_i$ differs with CT node $n$ only in the path, constraint, and cost lower bound of agent $a_i$, according to Definition~\ref{def:compatible_old}, we only need to prove that there exists a child CT node in $nodes_i$ that $\pi_i$ is compatible with.

Since $\Piset_i'$ is a cost-unique Pareto-optimal frontier of agent $a_i$, there exists a path $\pi^* \in \Pi_i'$ such that $\mathbf{c}(\pi^*)\preceq \mathbf{c}(\pi_i)$.
With $n.\mathbf{lb}_i \preceq \mathbf{c}(\pi_i)$ and Property~\ref{prop0}, we have $\comax(n.\mathbf{lb}_i, \mathbf{c}(\pi^*)) \preceq \mathbf{c}(\pi_i)$. Since $\comax(n.\mathbf{lb}_i, \mathbf{c}(\pi^*)) \in 
\{\comax(n.\mathbf{lb}_i, \mathbf{c}(\pi)) \mid \pi \in \Piset_i'\}$, there exists a vector in $\mathbf{LB}_i$ that weakly dominates $\comax(n.\mathbf{lb}_i, \mathbf{c}(\pi^*))$, which in turn weakly dominates $\mathbf{c}(\pi_i)$.
Therefore, $\pi_i$ is compatible with the corresponding child CT node.
\end{proof}

\begin{example}
Consider CT node $n_1$ in Figure~\ref{fig:example_ct}.
When replanning for agent $a_1$ with new constraint $\langle a_1, D, 2\rangle$, the cost-unique Pareto-optimal frontier consists of two paths $\pi_1^{(1)} = [A, B, B, D]$ (with $\mathbf{c}(\pi_1^{(1)}) = (4,2)$) and $\pi_1^{(2)}= [A, C, C, D]$ (with $\mathbf{c}(\pi_1^{(2)}) = (2.5,3)$).
For these two paths, we have $ \comax(n_1.\mathbf{lb}_1, \mathbf{c}(\pi_1^{(1)}))=(4,3)$ and  $\comax(n_1.\mathbf{lb}_1, \mathbf{c}(\pi_1^{(2)}))=(2.5,3)$.
Cost splitting creates only one child CT node, whose cost lower bound and path for agent $a_i$ are set to $(2.5,3)$  and $\pi_1^{(2)}$, respectively. 
Conceptually, this child CT node corresponds to CT node $n_4$ in Figure~\ref{fig:example_ct} since these two CT nodes share the same sets of constraints and paths.
Since $(4,3)$ is weakly dominated by $(2.5,3)$, cost splitting does not generate a child CT node for it. 
Intuitively, assuming cost splitting generates this child CT node, which corresponds to CT node $n_3$, every path that is compatible with $n_3$ is also compatible with $n_4$. Therefore, it is not necessary to generate $n_3$.

Consider CT node $n_2$. When replanning for agent $a_1$ with new constraint $\langle a_1, D, 2\rangle$, the cost-unique Pareto-optimal frontier consists of two paths $\pi_1^{(1)}$ and $\pi_1^{(2)}$, too.
For these two paths, we have $\comax(n_2.\mathbf{lb}_1, \mathbf{c}(\pi_1^{(1)}))=(4,2)$ and $\comax(n_2.\mathbf{lb}_1, \mathbf{c}(\pi_1^{(2)}))=(3,3)$, respectively.
Cost splitting generates two child CT node, which corresponds to CT nodes $n_6$ and $n_7$ in Figure~\ref{fig:example_ct}.
The path of $a_1$ in $n_6$ and $n_7$ are set to $\pi_1^{(1)}$ and $\pi_1^{(2)}$, respectively.
Note that the path of $a_1$ (with cost $(2.5, 3)$) in CT node $n_7$ is not compatible with it. 
As we will show later, this does not affect the optimality and completeness of MO-CBS since the path cost dominates $n_7.\mathbf{lb}_1=(3,3)$ and hence dominates all paths of agent $a_1$ that are compatible with  CT node $n_7$.

Overall, MO-CBS with cost splitting generates one fewer CT node (that is, CT node $n_3$) than MO-CBS with standard splitting.
\label{example:cost-split}
\end{example}

\begin{property}
When splitting a CT node, MO-CBS with cost splitting never generates more child CT nodes than MO-CBS with standard splitting.
\end{property}
\begin{proof}
The property holds because, according to Line~\ref{line:LBi} of Algorithm~\ref{alg:cost-split}, the size of $\mathbf{LB}_i$ (that is, the number of CT nodes that MO-CBS with cost splitting generates) is at most the size of $\Piset'_i$  (that is, the number of CT nodes that MO-CBS with standard splitting generates).
\end{proof}

\section{Disjoint Cost Splitting}

In Example~\ref{example:cost-split}, any path of agent $a_1$ whose cost is weakly dominated by $(3,3)$ is compatible with both CT nodes $n_1$ (with $n_1.\mathbf{lb}_1 = (1.5, 3)$) and $n_2$ (with $n_2.\mathbf{lb}_1 = (3, 2)$).
Hence, CT nodes $n_1$ and $n_2$ can still lead to the same solution and thus result in duplicate search effort.
In this section, we describe our second splitting strategy for MO-CBS, disjoint cost splitting, which further reduces duplicate search effort of MO-CBS and ensures that, when splitting a CT node, any solution that is compatible with the CT node is compatible with exactly one of its child CT nodes.

In addition to maintaining cost lower bounds, MO-CBS with disjoint cost splitting also maintains a \emph{cost upper bound} $n.\mathbf{UB}_i$ for each agent $a_i$ in each CT node $n$, that consists of a set of cost vectors that are weakly dominated by $n.\mathbf{lb}_i$. It also modifies Definition~\ref{def:compatible_old} as well as the \textsc{Initialization} and \textsc{Split} functions used by Algorithm~\ref{alg:mo-cbs} as follows.
\begin{definition}
A path $\pi_i$ of agent $a_i$ is compatible with a CT node $n$ iff:
\begin{enumerate}
    \item $n.\mathbf{lb}_i \preceq \mathbf{c}(\pi_i)$;
    \item $\forall \ \mathbf{ub} \in n.\mathbf{UB}_i \, \mathbf{ub} \not \preceq \mathbf{c}(\pi_i)$; and
    \item $\pi_i$ satisfies all constraints of CT node $n$.
\end{enumerate}
A solution is compatible with a CT node iff all its paths are compatible with the CT node.
\label{def:compatible_new}
\end{definition}

In \textsc{Initialization}, MO-CBS with disjoint cost splitting finds a cost-unique Pareto-optimal frontier $\Piset_i$ for each agent $a_i$ and generates a root CT node for each combination of the paths in $\Piset_1\times\Piset_2\ldots\Piset_m$.
Let $[\pi_i^1,\pi_i^2\ldots \pi_i^{|\Piset_i|}]$ denote the paths of $\Piset_i$ sorted in a predetermined order.
For a root CT node $n$ with path $\pi_i^j$ for agent $a_i$, $n.\mathbf{UB}_i$ is initialized to $\ND(\{\comax(\mathbf{c}(\pi_i^j), \mathbf{c}(\pi_i^k))| k=1,2\ldots j - 1\})$.
For example, consider the cost upper bounds of CT nodes $n_1$ and $n_2$ in Figure~\ref{fig:example_ct}. If we sort the paths of $\Piset_1$ in lexicographic order with respect to their path costs (that is, path $\pi_1^1=[A,C,D]$ is before path $\pi_1^2=[A,B,D]$), then
$n_1.\mathbf{UB}_1$ is set to $\emptyset$, and $n_2.\mathbf{UB}_1$ is set to $\ND(\{\comax(\mathbf{c}(\pi_1^2), \mathbf{c}(\pi_1^1))\}) = \{(3,3)\}$. Both $n_1.\mathbf{UB}_2$ and $n_2.\mathbf{UB}_2$ are set to $\emptyset$ because the Pareto-optimal frontier for agent $a_2$ contains only one path.
\begin{property}
Any solution is compatible with exactly one root CT node.
\label{prop2_5}
\end{property}
\begin{proof}
Given a solution $\Pi$, we use  $\pi_i, i=1,2 \ldots m$ to denote the path for agent $a_i$ in solution $\Pi$ and $\pi_i'$ the first path in the sorted sequence of $\Piset_i$ with $\mathbf{c}(\pi_i') \preceq \mathbf{c}(\pi_i)$ during the initialization. We will prove that $\Pi$ is only compatible to the root CT node $n'$ generated with paths $\pi_1',\pi_2'\ldots \pi_m'$. 

We first prove that $\Pi$ is compatible with $n'$.
Since we have $n'.\mathbf{lb}_i = \mathbf{c}(\pi_i') \preceq \mathbf{c}(\pi_i)$, the first condition of Definition~\ref{def:compatible_new} holds. Since $\mathbf{c}(\pi_i') \prec \comax(\mathbf{v}, \mathbf{c}(\pi_i'))$ holds for any vector $\mathbf{v}$ of length $N$ with $\mathbf{c}(\pi_i') \not\preceq \mathbf{v}$, $c(\pi_i') \prec \mathbf{ub}$ holds for any $\mathbf{ub} \in n'.\mathbf{UB}_i$, that is, the second condition holds.The third condition also holds because root CT nodes do not have any constraints. Thus, $\Pi$ is compatible with $n'$.

We then prove that there does not exist another root CT node $n''$ that $\Pi$ is compatible with. If this is not true, then
there must exist some agent $a_i$ for which CT nodes $n'$ and $n''$ have different paths. Let $\pi_i''$ denote the path for agent $a_i$ in CT node $n''$. Since $\pi_i$ is compatible with both root CT nodes $n'$ and $n''$ and the cost lower bounds are equal to the path costs in root CT nodes, $\mathbf{c}(\pi) \preceq \mathbf{c}(\pi_i)$ holds for both $\pi=\pi_i'$ and $\pi=\pi_i''$, and thus, $\mathbf{c}' = \comax(\mathbf{c}(\pi_i''), \mathbf{c}(\pi_i')) \prec \mathbf{c}(\pi_i)$ holds.
Furthermore, since $\pi_i'$ is the first path in the sorted sequence of $\Piset_i$ with $\mathbf{c}(\pi_i') \preceq \mathbf{c}(\pi_i)$, $\pi_i''$ is after $\pi_i'$ in this sorted sequence. As a result, $\mathbf{c}'$ is either in $n''.\mathbf{UB}_i$ or dominated by a vector in $n''.\mathbf{UB}_i$, $\pi_i$ and thus $\pi_i$ is not compatible with $n''$; a contradiction is found.
\end{proof}

\begin{algorithm}[t]
    \footnotesize

  \Fn{\textsc{Split}$(n, \mathit{conf})$}{
        $children \gets \emptyset$\;
         \ForEach{ $a_i$ involved in $\mathit{conf}$ \label{line:disjoint_resolve_conflict}}{
            $\mathit{cons} \gets$ the constraint imposed on $a_i$\;
            $C'_i \gets n.constraints \cup \{\mathit{cons}\}$ \label{line:new_constraint_union}\;
            $\Piset'_i \gets \textsc{LowLevelSearch}(a_i, C'_i)$ \label{line:disjoint_cost_split_lowlevel} \;
            $\mathbf{LB}_i \gets \ND(\{\comax(n.\mathbf{lb}_i, \mathbf{c}(\pi))| \mathbf{\pi} \in \Piset_i'\})$ \;
            $\mathbf{UB} \gets n.\mathbf{UB}_i$ \label{line:init_UB}\;
            \ForEach{ $\mathbf{lb}\in \mathbf{LB}_i$ \label{line:inner_iteration}}{
                $n' \gets n$\;
                $n'.constraints\gets C'_i$\;
                $n'.\mathbf{lb}_i \gets \mathbf{lb}$\;
                $n'.\mathbf{UB}_i \gets \ND(\{\comax(\mathbf{lb}, \mathbf{v}) | \mathbf{v}\in \mathbf{UB}\})$\label{line:nondominated_nodes}\label{line:set_upper_bound}\;
                \lIf{$n'.\mathbf{lb}_i \in n'.\mathbf{UB}_i$ \label{line:delete_useless_node}}{%
                    \textbf{continue}%
                }
             $n'.paths[i]\gets$ a path $ \pi\in \Piset'_i$ such that $ \comax(n.\mathbf{lb}_i, \mathbf{c}(\pi)) = \mathbf{lb}$\;
                add $n'$ to $children$\;
                add $\mathbf{lb}$ to $\mathbf{UB}$ \label{line:add_lower_bound}\;
            }
            
        }
    \Return $children$\;
    }
\caption{Disjoint cost splitting}
\label{alg:disjoint-cost-split}
\end{algorithm}

Algorithm~\ref{alg:disjoint-cost-split} shows the $\textsc{Split}$ function for disjoint cost splitting.
Similar to cost splitting, for each agent $a_i$ involved in the conflict, disjoint cost splitting generates a set of cost lower bounds $\mathbf{LB}_i$ and create a child CT node for each vector in $\mathbf{LB}_i$.
Disjoint cost splitting uses variable $\mathbf{UB}$ to store $n.\mathbf{UB}_i$ and all cost lower bounds of the CT nodes that have been generated.
When generating a CT node $n'$, MO-CBS sets $n'.\mathbf{UB}_i$ to $\ND(\{\comax(n'.\mathbf{lb}_i, \mathbf{v}) | \mathbf{v}\in \mathbf{UB}\})$.
Note that, if $n'.\mathbf{UB}_i$ contains $n'.\mathbf{lb}_i$, there does not exists a path for agent $a_i$ (and hence a solution) that is compatible with CT node $n'$.
In such case, CT node $n'$ is pruned.

\begin{property}
Any solution that is compatible with a CT node is compatible with either exactly one of its child nodes or two of its child nodes that have different constraints.
\label{prop3}
\end{property}
\begin{proof}
Similar to the proof of Property~\ref{prop2}, we only need to prove that any solution $\Pi$ that is compatible with CT node $n$ and satisfies the new constraint imposed on agent $a_i$ is compatible with exactly one child node in $nodes_i$, where $nodes_i$ is the set of child CT nodes generated after replanning agent $a_i$.
We use  $\pi_i$ to denote the path for agent $a_i$ in solution $\Pi$.
From Property~\ref{prop2}, we know that there exists at least one child node in $nodes_i$, denoted as $n'$, such that the first and third conditions of Definition~\ref{def:compatible_new} hold for $\pi_i$ and $n'$.
We let $n'$ be the first such child node generated during the inner iteration of $\textsc{Split}$. 
Consider the iteration when $n'$ is created.
$\mathbf{ub} \not \preceq \mathbf{c}(\pi_i)$ and hence $\comax(n'.\mathbf{lb}_i, \mathbf{ub}) \not \preceq \mathbf{c}(\pi)$ hold for any $\mathbf{ub} \in \mathbf{UB}$. 
Therefore, path $\pi_i$ and also solution $\Pi$ are compatible with CT node $n'$. 

Then, we prove that there exists exactly one such child CT node. Because CT node $n'$ is the first child CT node of $n$ that $\Pi$ is compatible with, we only need to consider those nodes $n''$ generated after $n'$ with $n''.\mathbf{lb}_i \preceq \mathbf{c}(\pi_i)$. Together with $n'.\mathbf{lb}_i \preceq \mathbf{c}(\pi_i)$, we have $\mathbf{c'} = \comax(n'.\mathbf{lb}_i, n''.\mathbf{lb}_i) \preceq \mathbf{c}(\pi_i)$. Since $\mathbf{c'}$ is either in $n''.\mathbf{UB}_i$ or dominated by a vector in $n''.\mathbf{UB}_i$, $\pi_i$ and thus $\Pi$ are not compatible with $n''$. 
\end{proof}

\begin{example}

Consider CT node $n_2$ in Figure~\ref{fig:example_ct}.
When replanning for agent $a_1$ with new constraint $\langle a_1, D, 2\rangle$, 
disjoint cost splitting generates two child CT nodes that correspond to CT nodes $n_6$ and $n_7$ in Figure~\ref{fig:example_ct}.
From Example~\ref{example:cost-split}, we have $n_7.\mathbf{lb}_1 = (3,3)$.
When generating CT node $n_7$, $\mathbf{UB}$ contains $(3,3)$ from $n_2.\mathbf{UB}_1$ and $n_6.\mathbf{lb}_1=(4,2)$.
Therefore, $n_7.\mathbf{UB}_1$ is set to $\ND(\{\comax(n_7.\mathbf{lb}_1, (3,3)), \comax(n_7.\mathbf{lb}_1,$ $ (4,2))\})=\{ (3,3)\}$.
CT node $n_7$ is then pruned because $n_7.\mathbf{lb}_1\in n_7.\mathbf{UB}_1$.
Overall, MO-CBS with disjoint cost splitting generates two fewer CT node (that is, CT nodes $n_3$ and $n_7$) than MO-CBS with standard splitting.
\label{example:disjoint-cost-split}
\end{example}

\section{Theoretical Results}
We now prove the completeness and optimality of both of our splitting strategies. Our proof follows the proof given by original MO-CBS \cite{ren2021multi}. Let $PO$ be a cost-unique Pareto-optimal frontier for a given MO-MAPF instance, $\Solutions$ be the solutions that we have found by far during the search, $\mathbf{S} = \{\mathbf{c}(\Pi) \ | \ \Pi \in \Solutions \}$ be their costs, and $PO|\mathbf{S} = \{\Pi | \Pi \in PO, \mathbf{c}(\Pi) \not \in \mathbf{S}\}$ be the subset of $PO$ whose costs have not been found by far. 


\begin{lemma}
Any CT node that a solution in $PO|\mathbf{S}$ is compatible with is not pruned in Algorithm~\ref{alg:mo-cbs}.
\label{lem1}
\end{lemma}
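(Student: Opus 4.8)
The plan is to show directly that the pruning condition in Algorithm~\ref{alg:mo-cbs} can never fire on such a node. A CT node $n$ is pruned only when some already-found solution $\Pi' \in \Solutions$ satisfies $\mathbf{c}(\Pi') \preceq \mathbf{c}(n.paths)$ (Lines~\ref{line:prune_pop} and~\ref{line:prune_add}). So, fixing a solution $\Pi \in PO|\mathbf{S}$ that is compatible with $n$, I would argue that no such $\Pi'$ can exist, and conclude that $n$ survives. The argument should be uniform across both splitting strategies, since the Theoretical Results section covers cost splitting and disjoint cost splitting together.

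First I would establish an invariant that holds for every CT node produced by either splitting strategy: for each agent $a_i$, the cost of the node's stored path is weakly dominated by the node's lower bound, that is, $\mathbf{c}(n.paths[i]) \preceq n.\mathbf{lb}_i$. This follows by induction on the splitting process. In the base case, \textsc{Initialization} sets $n.\mathbf{lb}_i = \mathbf{c}(n.paths[i])$, so the relation holds with equality. In the inductive step, the \textsc{Split} functions of Algorithm~\ref{alg:cost-split} and Algorithm~\ref{alg:disjoint-cost-split} leave the path and lower bound of every unconstrained agent unchanged (so the relation is inherited), while for the constrained agent $a_i$ they set $n'.\mathbf{lb}_i = \mathbf{lb}$ and choose $n'.paths[i] = \pi$ with $\comax(n.\mathbf{lb}_i, \mathbf{c}(\pi)) = \mathbf{lb}$; since $\comax(n.\mathbf{lb}_i, \mathbf{c}(\pi)) \succeq \mathbf{c}(\pi)$, we again obtain $\mathbf{c}(n'.paths[i]) \preceq n'.\mathbf{lb}_i$. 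Because both algorithms set the path and lower bound of the constrained agent in exactly this way, one argument covers both splitting strategies at once.

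Next I would combine this invariant with compatibility. Writing $\pi_i$ for the path of agent $a_i$ in $\Pi$, compatibility (Definition~\ref{def:compatible_old} or~\ref{def:compatible_new}, condition~1) gives $n.\mathbf{lb}_i \preceq \mathbf{c}(\pi_i)$. Chaining with the invariant yields $\mathbf{c}(n.paths[i]) \preceq n.\mathbf{lb}_i \preceq \mathbf{c}(\pi_i)$ for every agent, and summing over all agents (weak domination is preserved under addition) gives $\mathbf{c}(n.paths) \preceq \mathbf{c}(\Pi)$. Now suppose, for contradiction, that $n$ is pruned, witnessed by $\Pi' \in \Solutions$ with $\mathbf{c}(\Pi') \preceq \mathbf{c}(n.paths)$. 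Then $\mathbf{c}(\Pi') \preceq \mathbf{c}(\Pi)$. Since $\Pi \in PO$ is Pareto-optimal, $\mathbf{c}(\Pi') \not\prec \mathbf{c}(\Pi)$, so in fact $\mathbf{c}(\Pi') = \mathbf{c}(\Pi)$. But then $\mathbf{c}(\Pi) \in \mathbf{S}$, contradicting $\Pi \in PO|\mathbf{S}$, which requires $\mathbf{c}(\Pi) \notin \mathbf{S}$. Hence $n$ is not pruned.

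The main obstacle is the invariant $\mathbf{c}(n.paths[i]) \preceq n.\mathbf{lb}_i$, and in particular recognizing that the node's own stored path need not be compatible with the node itself: Example~\ref{example:cost-split} already exhibits this for $n_7$, whose stored path cost $(2.5,3)$ dominates its lower bound $(3,3)$. The key realization is that the pruning argument does not need the stored path to be compatible; it only needs the stored path cost to be weakly dominated by the lower bound, so that it is coordinate-wise at least as good as the path of any compatible solution. Once this subtlety is handled uniformly for both \textsc{Split} variants, the remaining steps are routine applications of the elementary monotonicity of $\comax$ and addition together with the definitions of Pareto-optimality and of $PO|\mathbf{S}$.
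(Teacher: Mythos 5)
Your proof is correct and follows essentially the same route as the paper's: establish $\mathbf{c}(n.paths[i]) \preceq n.\mathbf{lb}_i$ (the paper asserts this in one line from the $\comax$ construction, where you spell out the induction), chain it with condition~1 of compatibility to get $\mathbf{c}(n.paths) \preceq \mathbf{c}(\Pi)$, and then use Pareto-optimality together with $\mathbf{c}(\Pi) \notin \mathbf{S}$ to rule out any pruning witness. Your observation that the stored path need not itself be compatible with the node is exactly the subtlety the paper's invariant is designed to sidestep, so no changes are needed.
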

\begin{proof}
Suppose that a solution $\Pi \in PO|\mathbf{S}$ is compatible with a CT node $n$. For each agent $a_i$ and its corresponding path $\pi_i$ in $\Pi$, we have $n.\mathbf{lb}_i \preceq \mathbf{c}(\pi_i)$ (from Definition~\ref{def:compatible_new}) and $\mathbf{c}(n.paths[i]) \preceq n.\mathbf{lb}_i$ (because $n.\mathbf{lb}_i$ is calculated by taking $\comax$ with $\mathbf{c}(n.paths[i])$ and another vector).
We thus have $\mathbf{c}(n.paths[i]) \preceq \mathbf{c}(\pi_i)$ and hence $\mathbf{c}(n.paths) \preceq \mathbf{c}(\Pi)$. Since $\Pi$ is Pareto-optimal with $\mathbf{c}(\Pi) \notin \mathbf{S}$, there is no solution in $\Solutions$ whose cost weakly dominates $\Pi$ and thus $\mathbf{c}(n.paths)$. Hence, CT node $n$ is not pruned.
\end{proof}

\begin{lemma}
Every solution in $PO|\mathbf{S}$ is compatible with at least one CT node in $Open$ at the end of every iteration of Algorithm~\ref{alg:mo-cbs}.
\label{lem3}
\end{lemma}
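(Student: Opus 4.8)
The plan is to prove the statement as a loop invariant of the \textbf{while} loop of Algorithm~\ref{alg:mo-cbs}, by induction on the iteration count, and to argue uniformly for both splitting strategies by invoking Property~\ref{prop2} for cost splitting and Property~\ref{prop3} for disjoint cost splitting (each guarantees that a solution compatible with a split CT node remains compatible with at least one of its children). For the base case, I would observe that right after \textsc{Initialization} we have $\Solutions=\emptyset$, hence $\mathbf{S}=\emptyset$ and $PO|\mathbf{S}=PO$, while $Open$ holds all root CT nodes; Property~\ref{prop1_5} (cost splitting) or Property~\ref{prop2_5} (disjoint cost splitting) then gives that every solution, and in particular every solution in $PO|\mathbf{S}$, is compatible with at least one node in $Open$.

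For the inductive step I would fix an arbitrary iteration, write $Open$, $\mathbf{S}$ for the values at its start and $Open'$, $\mathbf{S}'$ for the values at its end, and take an arbitrary $\Pi\in PO|\mathbf{S}'$. Since $\Solutions$ only grows, $\mathbf{S}\subseteq\mathbf{S}'$ and thus $PO|\mathbf{S}'\subseteq PO|\mathbf{S}$, so the induction hypothesis applies to $\Pi$ and yields a CT node $n^\star\in Open$ with which $\Pi$ is compatible. Let $n$ be the node popped on Line~\ref{line:pop_node}. If $n^\star\neq n$, then $n^\star$ is untouched this iteration and remains in $Open'$, and since compatibility (Definition~\ref{def:compatible_new}) does not reference $\Solutions$, $\Pi$ is still compatible with $n^\star\in Open'$ and we are done. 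Hence the crux is the case $n^\star=n$, i.e.\ the only available witness for $\Pi$ is the popped node.

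In that case I would rule out the two terminal branches and then use the split. First, $n$ cannot be pruned on Line~\ref{line:prune_pop}: because $\Pi\in PO|\mathbf{S}$ is compatible with $n$, Lemma~\ref{lem1} says $n$ is not pruned. Second, $n$ cannot be conflict-free and added as a solution on Line~\ref{line:add_solution}: compatibility of $\Pi$ with $n$ gives $\mathbf{c}(n.paths)\preceq\mathbf{c}(\Pi)$ exactly as in the first part of the proof of Lemma~\ref{lem1}, and since a conflict-free $n.paths$ is itself a solution while $\Pi$ is Pareto-optimal, this forces $\mathbf{c}(n.paths)=\mathbf{c}(\Pi)$; but then $\mathbf{c}(\Pi)\in\mathbf{S}'$, contradicting $\Pi\in PO|\mathbf{S}'$. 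Therefore $n$ has a conflict and is split. Property~\ref{prop2} (or Property~\ref{prop3}) then produces a child CT node $n'$ compatible with $\Pi$; since no solution is added in this branch we have $\mathbf{S}=\mathbf{S}'$, so $\Pi\in PO|\mathbf{S}$ is compatible with $n'$, and Lemma~\ref{lem1} guarantees that $n'$ survives the pruning check on Line~\ref{line:prune_add} and is inserted into $Open'$. Thus $\Pi$ is compatible with $n'\in Open'$, closing the induction.

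The step I expect to be the main obstacle is the conflict-free branch: one must notice that adding $n.paths$ to $\Solutions$ shrinks $PO|\mathbf{S}$ in precisely the right way, and that combining ``$n.paths$ is a genuine solution'', ``$\mathbf{c}(n.paths)\preceq\mathbf{c}(\Pi)$ from compatibility'', and ``$\Pi$ is Pareto-optimal'' collapses to cost equality, so that no solution of $PO|\mathbf{S}'$ can have been left compatible only with the just-consumed node $n$. The remaining care is purely bookkeeping, namely tracking which of $Open$ and $\mathbf{S}$ actually change in each branch so that Lemma~\ref{lem1} is always applied with the correct value of $\Solutions$.
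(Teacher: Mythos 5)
Your proof is correct and follows essentially the same route as the paper's: induction with the base case supplied by Properties~\ref{prop1_5} and \ref{prop2_5}, and an inductive step that case-splits on whether the popped node is the witness, uses Lemma~\ref{lem1} to rule out pruning, Pareto-optimality to handle the conflict-free branch, and Properties~\ref{prop2}/\ref{prop3} to pass compatibility to a child. Your version is somewhat more careful about the bookkeeping (distinguishing $\mathbf{S}$ at the start and end of the iteration and phrasing the conflict-free case as a contradiction rather than as removal from $PO|\mathbf{S}$), but the underlying argument is identical.
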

\begin{proof}
We prove this lemma by induction.
Properties~\ref{prop1_5} and \ref{prop2_5} show that the lemma holds in \textsc{Initialization} (Algorithm~\ref{alg:mo-cbs}).
We assume that this lemma holds before the $k$-th CT node is extracted from $Open$. When the $k$-th CT node $n$ is extracted, for every solution $\Pi \in PO|\mathbf{S}$, there are three cases:
(1) If $\Pi$ is not compatible with CT node $n$, then, based on the assumption, it is compatible with one CT node in $Open$;
(2) If $\Pi$ is compatible with CT node $n$ and $n.paths$ is conflict-free, then $\mathbf{c}(n.paths)= \mathbf{c}(\Pi)$ because $\mathbf{c}(n.paths) \preceq \mathbf{c}(\Pi)$ and $\Pi$ is Pareto-optimal. So, $n.paths$ is added to $\Solutions$, and $\Pi$ is removed from $PO|\mathbf{S}$; or
(3) If $\Pi$ is compatible with CT node $n$ and $n.paths$ have conflicts, then from Lemma~\ref{lem1} and Properties~\ref{prop2} and \ref{prop3}, $\Pi$ is still compatible with at least one child CT node that is added to $Open$.
Therefore, the lemma holds.
\end{proof}

Our Lemma~\ref{lem3} corresponds to Lemma 3 in \cite{ren2021multi}.
The following theorem combines Theorems 1 and 2 in \cite{ren2021multi} and shows that (disjoint) cost splitting maintains the optimality and completeness of MO-CBS.
The same proofs for Theorems 1 and 2 in \cite{ren2021multi} apply here if combined with our Lemma~\ref{lem3} and hence is omitted. 

\begin{theorem}
MO-CBS with (disjoint) cost splitting finds a cost-unique Pareto-optimal frontier for a given MO-MAPF instance in finite time, if it exists.
\end{theorem}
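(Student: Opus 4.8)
The plan is to prove the theorem in two parts, correctness (soundness, optimality, and cost-uniqueness of the returned set) and finite termination, reusing Lemma~\ref{lem3} as the single load-bearing invariant, exactly as the original MO-CBS proof reuses its own Lemma~3. Two structural facts about Algorithm~\ref{alg:mo-cbs} drive the correctness argument: CT nodes are extracted in lexicographically non-decreasing order of $\mathbf{c}(n.paths)$, and whenever a solution $\Pi$ is compatible with a CT node $n$ one has $\mathbf{c}(n.paths)\preceq\mathbf{c}(\Pi)$, which is the inequality already derived inside the proof of Lemma~\ref{lem1}. I would use $PO$, $\mathbf{S}$, and $PO|\mathbf{S}$ as defined above, and assume a solution exists (otherwise $PO=\emptyset$, \Solutions is returned empty, and only termination is at issue).

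First I would show that every path set added to \Solutions at Line~\ref{line:add_solution} is a cost-unique Pareto-optimal solution. Soundness is immediate, since $n.paths$ is added only when $n$ is conflict-free. For optimality, suppose some added $n.paths$ were dominated by a Pareto-optimal $\Pi^\ast$ with $\mathbf{c}(\Pi^\ast)\prec\mathbf{c}(n.paths)$. If $\mathbf{c}(\Pi^\ast)\in\mathbf{S}$ then $n$ would have failed the test at Line~\ref{line:prune_pop}; otherwise $\Pi^\ast\in PO|\mathbf{S}$, so by Lemma~\ref{lem3} it is compatible with some $n'\in\Open$ that is still present when $n$ is extracted, giving $\mathbf{c}(n'.paths)\preceq\mathbf{c}(\Pi^\ast)\prec\mathbf{c}(n.paths)$. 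Then $n'$ is lexicographically strictly smaller than $n$, contradicting that $n$ is the extracted lexicographic minimum. The same pruning at Line~\ref{line:prune_pop} forbids adding two solutions of equal cost, so $\mathbf{S}$ is a set of distinct Pareto-optimal costs.

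Completeness would then follow by reading Lemma~\ref{lem3} at termination. The main loop ends only when \Open is empty; for the lemma to hold at the end of that final iteration, $PO|\mathbf{S}$ must be empty, i.e.\ every Pareto-optimal cost has entered $\mathbf{S}$. Combined with the previous paragraph, $\mathbf{S}$ equals the set of all Pareto-optimal costs and \Solutions holds exactly one solution per cost, so \Solutions is a cost-unique Pareto-optimal frontier. This is the conclusion of Theorem~1 of the original MO-CBS, now routed through our Lemma~\ref{lem3}; the (disjoint) cost-splitting machinery enters only through Properties~\ref{prop1_5}, \ref{prop2}, \ref{prop2_5}, and~\ref{prop3}, which are what feed Lemma~\ref{lem3}.

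The main obstacle is finite termination, mirroring Theorem~2 of the original MO-CBS. I would first argue that the Pareto-optimal frontier is finite: on a finite graph any joint path using more than a problem-dependent number of wait actions is dominated, so only finitely many cost-unique Pareto-optimal solutions, and hence finitely many solution costs, arise. The monotone quantity to exploit is the per-agent lower bound, since Line~\ref{line:LBi} sets each child's $\mathbf{lb}_i$ to $\comax(n.\mathbf{lb}_i,\mathbf{c}(\pi))$ and thus lower bounds only grow along a branch while satisfying $\mathbf{c}(n.paths[i])\preceq n.\mathbf{lb}_i$. The delicate points, which I expect to absorb most of the work, are (i) bounding the number of distinct cost-unique Pareto-optimal low-level paths that can be generated under the constraint sets surviving pruning, despite real-valued edge costs and unboundedly many syntactically distinct constraint sets, and (ii) verifying that once every Pareto-optimal cost lies in $\mathbf{S}$, the pruning tests at Lines~\ref{line:prune_pop} and~\ref{line:prune_add} eventually discard every remaining node so that \Open empties. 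Because these are precisely the steps of the original termination proof and because cost splitting never generates more children than standard splitting (so the constraint tree is no larger), the original argument transfers once Lemma~\ref{lem3} replaces the original invariant.
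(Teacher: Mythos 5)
Your proposal takes essentially the same route as the paper, which gives no independent argument but simply asserts that the proofs of Theorems 1 and 2 of the original MO-CBS paper carry over once their invariant is replaced by Lemma~\ref{lem3}; your expansion (optimality via the lexicographic-minimum pop combined with $\mathbf{c}(n.paths)\preceq\mathbf{c}(\Pi)$ from Lemma~\ref{lem1}, completeness by reading Lemma~\ref{lem3} when the $Open$ list empties, cost-uniqueness from the pruning test, and termination deferred to the original argument) is exactly what that reduction amounts to. One correction: your first ``structural fact,'' that CT nodes are extracted in lexicographically non-decreasing order of $\mathbf{c}(n.paths)$, is false for MO-CBS --- replanning under an added constraint returns a whole Pareto-optimal frontier, which can contain a path lexicographically cheaper than the parent's chosen path; in Figure~\ref{fig:example_ct}, child $n_7$ has cost $(6.5,7)$ while its parent $n_2$ has cost $(7,6)$ and is extracted first. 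Fortunately your actual contradiction only uses that the popped node is the lexicographic minimum of the $Open$ list at the instant of extraction (together with the presence of a compatible node $n'$ guaranteed by Lemma~\ref{lem3}), not any global monotonicity, so the argument stands; just delete that claim.
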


\begin{figure*}[ht!]
    \centering

    \includegraphics[width=0.9\textwidth]{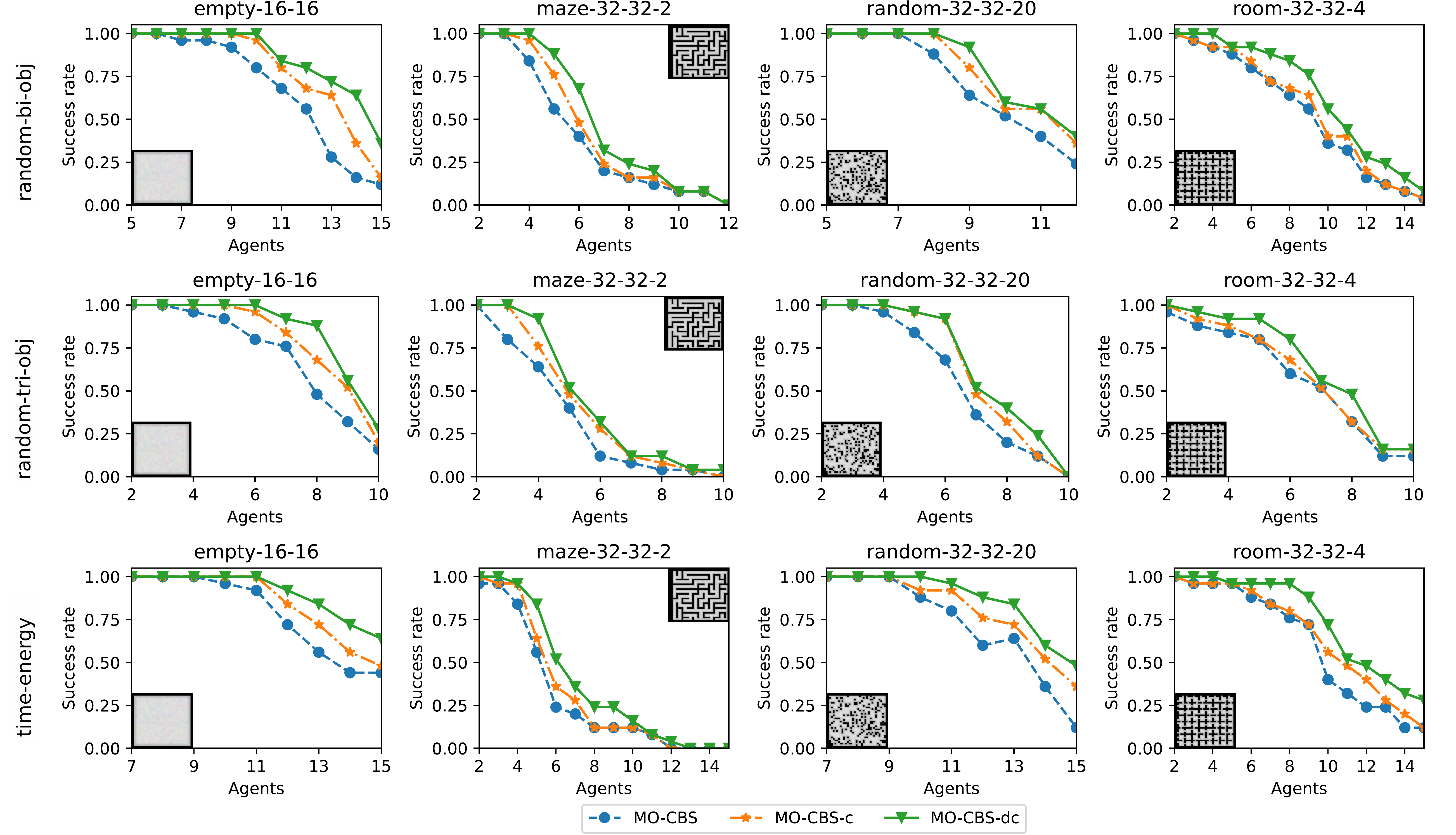}
    \caption{Success rate results for MO-CBS variants on different grid maps with different combinations of objectives. }
    \label{fig:result_all}
\end{figure*}

\section{Experimental Results}

In this section, we experimentally evaluate cost splitting and disjoint cost splitting.
\footnote{Please see Appendix for additional experimental results.}
The variants of MO-CBS are MO-CBS, MO-CBS-c, and MO-CBS-dc, where c adds cost splitting and dc adds disjoint cost splitting.
We implemented all algorithms in Python that share the same code base as much as possible. (Source code and data will be made available to public upon acceptance for publication.)
We ran experiments on a server with a AMD EPYC 7742 CPU. We limited the available memory to 16 GB and the time limit for solving each MO-MAPF instance to 1,000 seconds.

We use three different combinations of cost metrics, which lead to three different combinations of objectives:
\begin{enumerate}
    \item (\textit{random-bi-obj}) Every edge is assigned a 2-dimensional cost vector with each component being an integer randomly sampled from $\{1, 2\}$.
    \item (\textit{random-tri-obj}) Similar to \textit{random-bi-obj} except that every edge is assigned a 3-dimensional cost vector.
    \item (\textit{time-energy}) Each vertex is annotated with an integer indicating its height. We consider two cost metrics, namely \emph{travel time}, that is, the number of timesteps till termination, and total \emph{energy consumption}, where moving upward from a vertex at height $i$ to a vertex at height $j$ costs $j-i$ units of energy, and an action otherwise costs one unit of energy.
    We set the height of a vertex according to vertex's distance to the central point of the map, which creates a hill-like height map. More details are shown in the appendix.
\end{enumerate}

We evaluate the algorithms on four grid maps from the MAPF benchmark~\cite{stern2019mapf},\footnote{\url{https://movingai.com/benchmarks/mapf.html}} namely \emph{empty-16-16}, \emph{maze-32-32-2}, \emph{random-32-32-20}, and \emph{room-32-32-4}.
For each combination of objectives and each grid map, we vary the number of agents and, for each number of agents, average over 25 ``random scenarios'' from the benchmark set.


Figure~\ref{fig:result_all} shows the \emph{success rate}, that is, the percentage of MO-MAPF instances that an algorithm solves within the time limit, for different combinations of objectives, maps, and numbers of agents.
In most cases, MO-CBS-dc has higher success rates than MO-CBS-c, and they both have higher success rates than MO-CBS. 
Figure~\ref{fig:instance_time} shows the individual runtimes (in seconds) of MO-CBS and MO-CBS-dc for all instances.
We use different colors to distinguish instances with different combinations of objectives. 
MO-CBS-dc has a smaller runtime than MO-CBS in almost all instances with a maximum speedup of more than $125$ times.

\begin{figure}[t]
    \centering
    \includegraphics[width=0.26\textwidth]{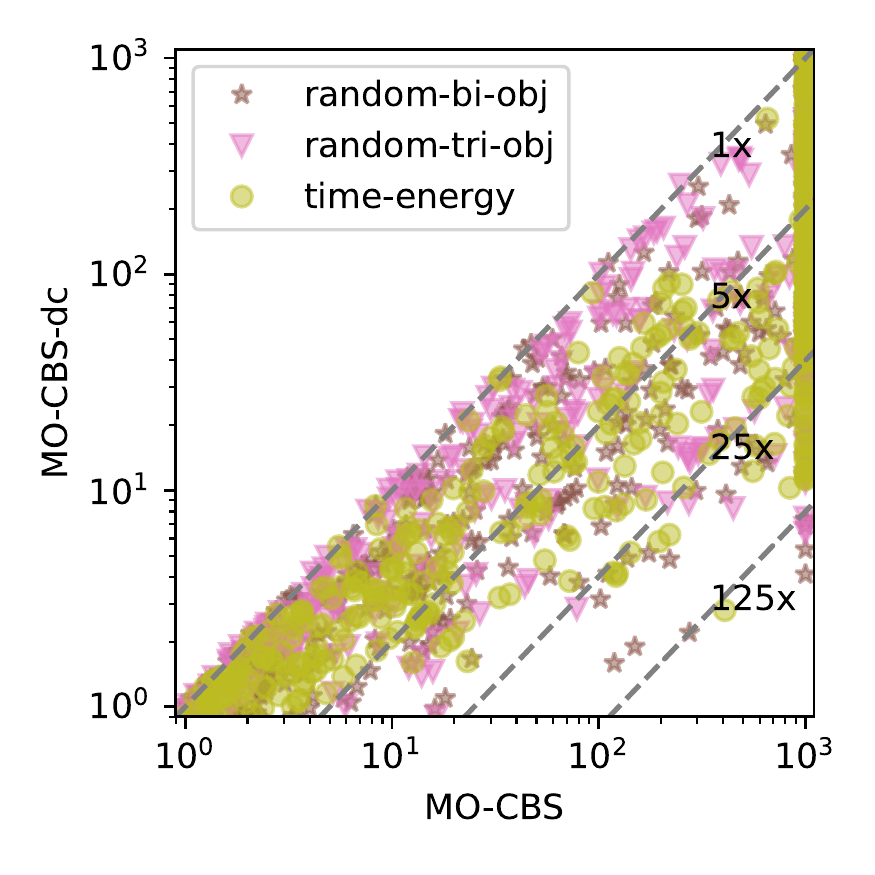}
    \caption{Runtimes of MO-CBS and MO-CBS-dc on all MO-MAPF instances. The $x$- and $y$-coordinates of a dot corresponds to the runtimes (in seconds) of MO-CBS and MO-CBS-dc on an MO-MAPF instance, respectively.}
    \label{fig:instance_time}
\end{figure}

\section{Conclusions}

We propose two splitting strategies for MO-CBS, namely cost splitting and disjoint cost splitting, both of which speed up MO-CBS without losing its optimality or completeness guarantees. Our experimental results show that disjoint cost splitting, our best splitting strategy, speeds up MO-CBS by up to two orders of magnitude and substantially improves its success rates in various settings.
Future work includes improving MO-CBS with improvement techniques of CBS, such as adding heuristics~\cite{felner2018adding} and symmetry breaking~\cite{li2019symmetry}.

\bibliography{aaai22}

\begin{thebibliography}{11}
\providecommand{\natexlab}[1]{#1}

\bibitem[{Felner et~al.(2018)Felner, Li, Boyarski, Ma, Cohen, Kumar, and
  Koenig}]{felner2018adding}
Felner, A.; Li, J.; Boyarski, E.; Ma, H.; Cohen, L.; Kumar, T. K.~S.; and
  Koenig, S. 2018.
\newblock Adding heuristics to conflict-based search for multi-agent path
  finding.
\newblock In \emph{{ International Conference on Automated Planning and
  Scheduling (ICAPS)}}, 83--87.

\bibitem[{Li et~al.(2019)Li, Harabor, Stuckey, Ma, and Koenig}]{li2019symmetry}
Li, J.; Harabor, D.; Stuckey, P.~J.; Ma, H.; and Koenig, S. 2019.
\newblock Symmetry-Breaking Constraints for Grid-Based Multi-Agent Path
  Finding.
\newblock In \emph{{AAAI Conference on Artificial Intelligence (AAAI)}},
  6087--6095.

\bibitem[{Ma et~al.(2016)Ma, Tovey, Sharon, Kumar, and Koenig}]{ma2016multi}
Ma, H.; Tovey, C.; Sharon, G.; Kumar, T. K.~S.; and Koenig, S. 2016.
\newblock Multi-Agent Path Finding with Payload Transfers and the
  Package-Exchange Robot-Routing problem.
\newblock In \emph{{AAAI Conference on Artificial Intelligence (AAAI)}},
  3166--3173.

\bibitem[{Morris et~al.(2016)Morris, Pasareanu, Luckow, Malik, Ma, Kumar, and
  Koenig}]{morris2016planning}
Morris, R.; Pasareanu, C.~S.; Luckow, K.; Malik, W.; Ma, H.; Kumar, T. K.~S.;
  and Koenig, S. 2016.
\newblock Planning, Scheduling and Monitoring for Airport Surface Operations.
\newblock In \emph{AAAI-16 Workshop on Planning for Hybrid Systems}.

\bibitem[{Ren, Rathinam, and Choset(2021{\natexlab{a}})}]{ren2021multi}
Ren, Z.; Rathinam, S.; and Choset, H. 2021{\natexlab{a}}.
\newblock Multi-objective conflict-based search for multi-agent path finding.
\newblock In \emph{{{IEEE} International Conference on Robotics and Automation
  ({ICRA})}}, 8786--8791.

\bibitem[{Ren, Rathinam, and
  Choset(2021{\natexlab{b}})}]{ren2021subdimensional}
Ren, Z.; Rathinam, S.; and Choset, H. 2021{\natexlab{b}}.
\newblock Subdimensional expansion for multi-objective multi-agent path
  finding.
\newblock \emph{{IEEE Robotics and Automation Letters}}, 6(4): 7153--7160.

\bibitem[{Sharon et~al.(2015)Sharon, Stern, Felner, and
  Sturtevant}]{sharon2015conflict}
Sharon, G.; Stern, R.; Felner, A.; and Sturtevant, N.~R. 2015.
\newblock Conflict-based search for optimal multi-agent pathfinding.
\newblock \emph{Artificial Intelligence}, 219: 40--66.

\bibitem[{Stern et~al.(2019)Stern, Sturtevant, Atzmon, Walker, Li, Cohen, Ma,
  Kumar, Felner, and Koenig}]{stern2019mapf}
Stern, R.; Sturtevant, N.~R.; Atzmon, D.; Walker, T.; Li, J.; Cohen, L.; Ma,
  H.; Kumar, T. K.~S.; Felner, A.; and Koenig, S. 2019.
\newblock Multi-Agent Pathfinding: Definitions, Variants, and Benchmarks.
\newblock In \emph{{Symposium on Combinatorial Search ({socs})}}, 151--158.

\bibitem[{Wagner and Choset(2015)}]{wagner2015subdimensional}
Wagner, G.; and Choset, H. 2015.
\newblock Subdimensional expansion for multirobot path planning.
\newblock \emph{{Artificial intelligence}}, 219: 1--24.

\bibitem[{Wurman, D'Andrea, and Mountz(2008)}]{wurman2008coordinating}
Wurman, P.~R.; D'Andrea, R.; and Mountz, M. 2008.
\newblock Coordinating Hundreds of Cooperative, Autonomous Vehicles in
  Warehouses.
\newblock \emph{AI Magazine}, 29(1): 9--20.

\bibitem[{Yu and LaValle(2013)}]{yu2013structure}
Yu, J.; and LaValle, S.~M. 2013.
\newblock Structure and Intractability of Optimal Multi-Robot Path Planning on
  Graphs.
\newblock In \emph{{AAAI Conference on Artificial Intelligence (AAAI)}},
  1443--1449.

\end{thebibliography}

\end{document}


\maketitle

In the appendix, we provide implementation details and additional experimental results that are not included in the main document.

\section{Implementation Details}

\begin{figure}[t]
    \centering
    \includegraphics[width=0.26\textwidth]{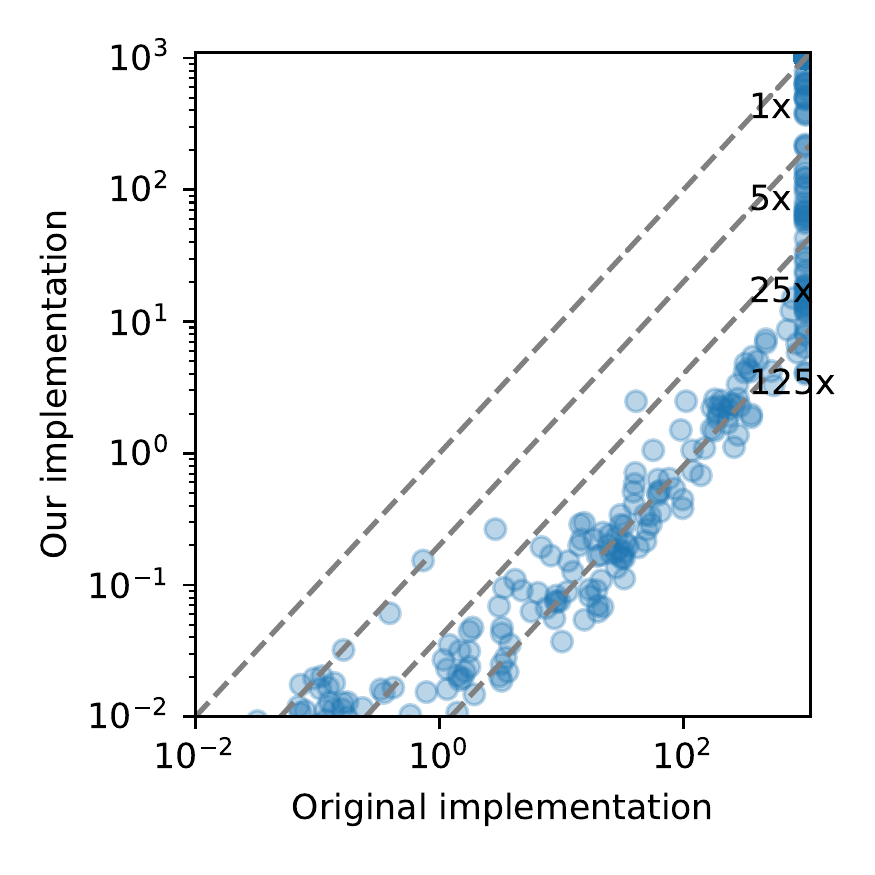}
    \caption{Runtimes (in seconds) of the original MO-CBS implementation and our MO-CBS implementation on individual MO-MAPF instances. 
    }
    \label{fig:compare}
\end{figure}

Our Python implementations of MO-CBS, MO-CBS-c, and MO-CBS-dc are partly based on the Python implementation made available online by the authors of
MO-CBS~\cite{ren2021multi}.\footnote{\url{https://github.com/wonderren/public_pymomapf}} However, we have made the following changes to improve the efficiency of the original implementation:
\begin{enumerate}
    \item While the original implementation uses zero heuristic for the low level of MO-CBS,
    we implemented the same heuristic as~\cite{sharon2015conflict}, which calculates the minimum cost from any state to the goal state for each objective and each agent that ignores the constraints.
    Such heuristic can be efficiently computed by by running the  Dijkstra's algorithm from the goal vertex of an agent.
    It is also a common practice to implement such heuristics for CBS and its variants
    since the heuristic can speed-up the low level substantially and only needs to be computed once.
    \item We found that MO-CBS often calls the low level search for the same agent and set of constraints multiple times. 
    This is partly because of the multiple root CT nodes that MO-CBS has.
    To speed up MO-CBS, we cache the Pareto-optimal frontiers computed by the low level in a hash table, using the indices of the agents and the sets of all constraints imposed on them as keys.
    
\end{enumerate}

Figure~\ref{fig:compare} shows the individual runtimes (in seconds) of the original MO-CBS implementation and our new MO-CBS implementation  (without any improvement that we proposed) for \textit{random-bi-obj} instances on grid map \textit{empty-16-16}. 
Our new implementation runs significantly faster than the original implementation and is hence used in our experimental evaluation.

\section{Additional Experimental Results}

\begin{figure}[t]
    \centering
    \includegraphics[width=0.4\textwidth]{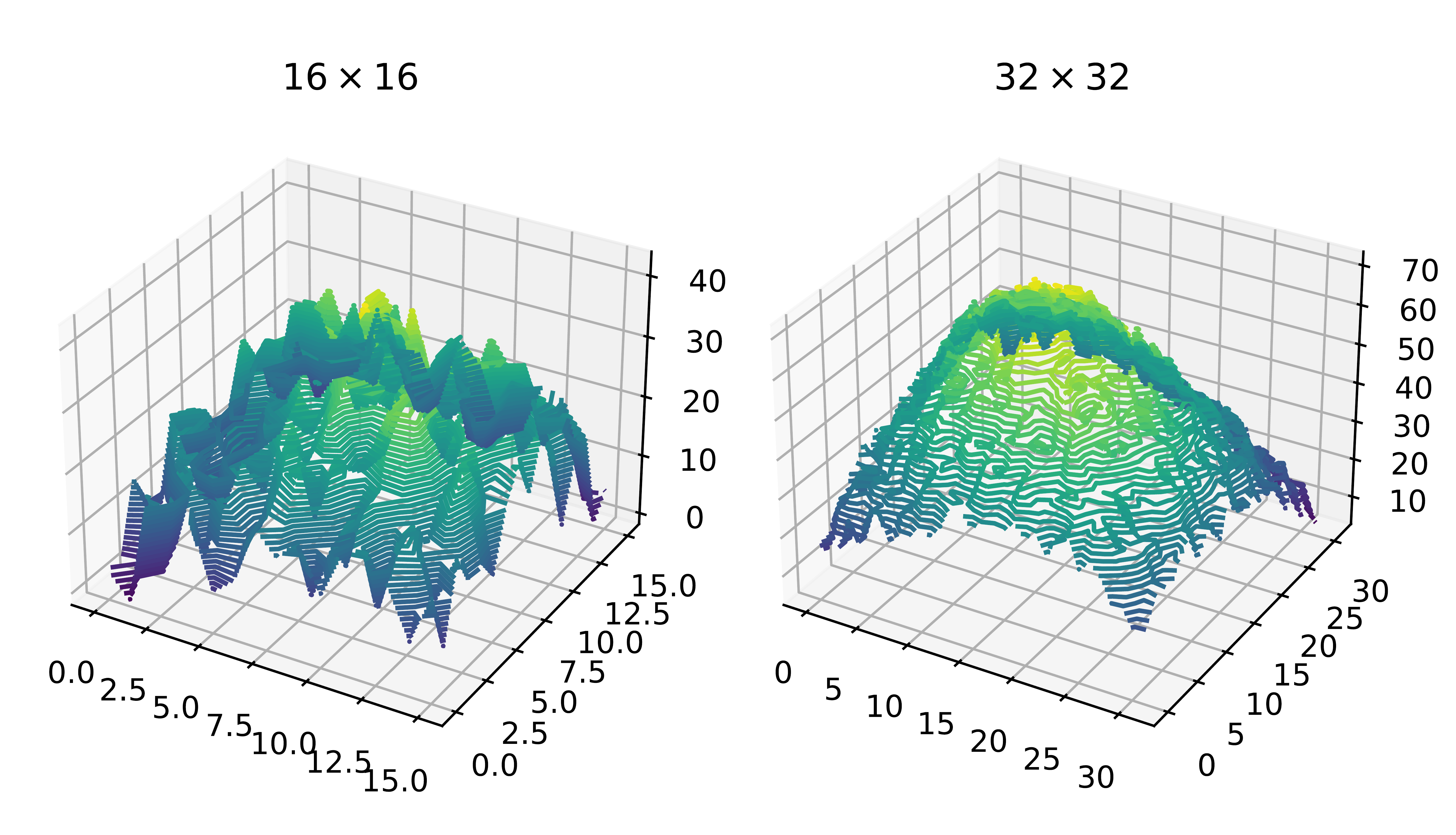}
    \caption{Height maps that are used in the experimental evaluation.}
    \label{fig:hill}
\end{figure}

For the third combination of objectives \textit{time-energy}, we created two height map, a $16\times 16$ one for grid map \emph{empty-16-16} and a $32\times 32$ one for grid maps \emph{maze-32-32-2}, \emph{random-32-32-20}, and \emph{room-32-32-4}.
Figure~\ref{fig:hill} shows the visualization of the two height maps we created.

\begin{table}[t]
    \centering
    \begin{tabular}{|l|rrr|}
    \hline
         & MO-CBS & MO-CBS-c & MO-CBS-dc \\
         \hline
         \hline
        random-bi-obj & 5.28 & 3.42 & \textbf{2.72} \\
        random-tri-obj & 16.31 & 6.34 & \textbf{3.60}  \\
        time-energy &  4.06 & 2.92 & \textbf{2.52} \\
        \hline
        
    \end{tabular}
    \caption{Average branching factors for each algorithm and each combination of objectives.}
    \label{tab:branch}
\end{table}

\begin{figure*}[ht!]
    \centering
 
    \includegraphics[width=0.9\textwidth]{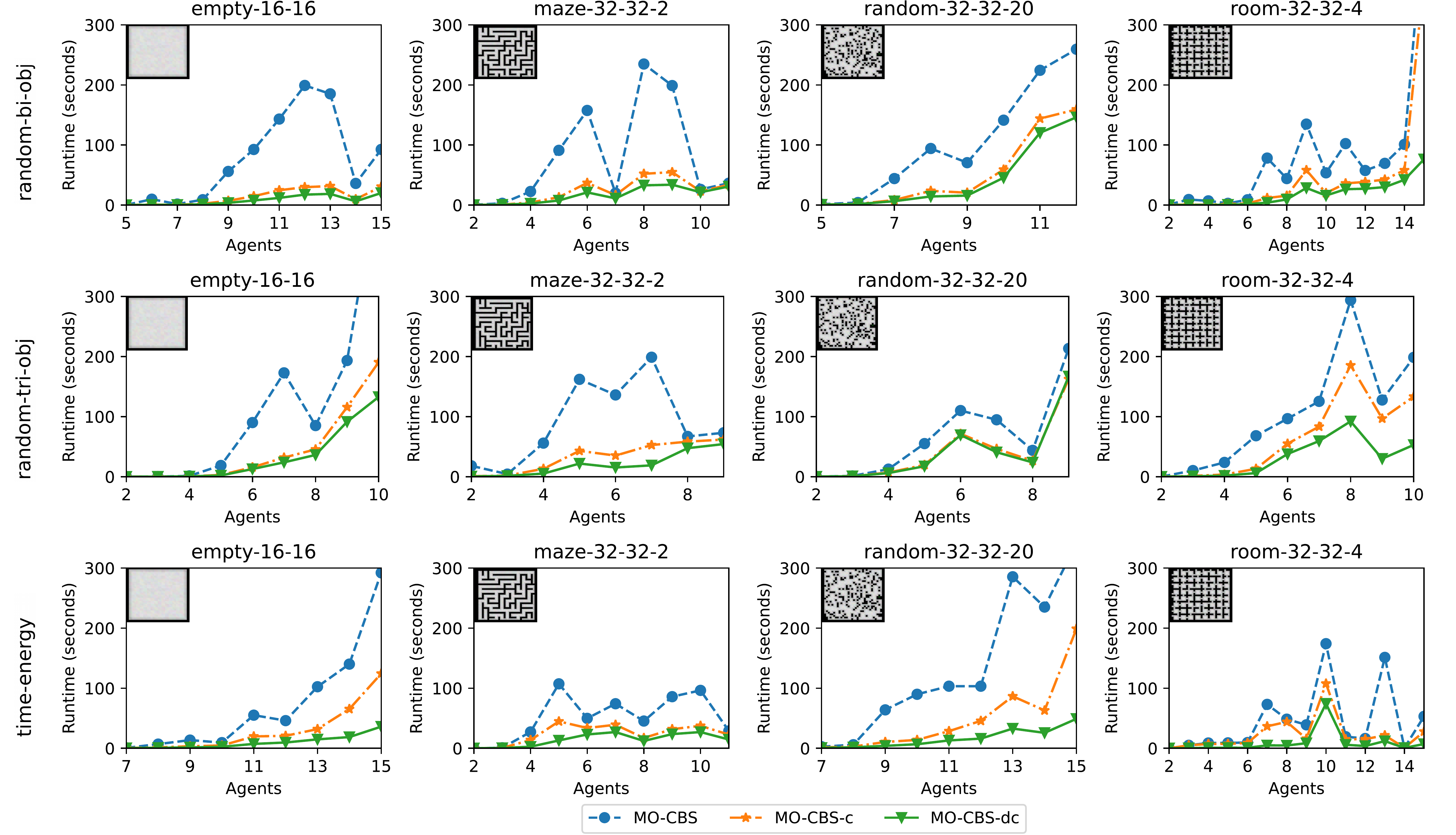}
    \caption{Average runtime over instances that are solved by all MO-CBS variants on different grid maps with different combinations of objectives. }
    \label{fig:result_all_time}
\end{figure*}

\begin{figure*}[ht!]
    \centering

    \includegraphics[width=0.9\textwidth]{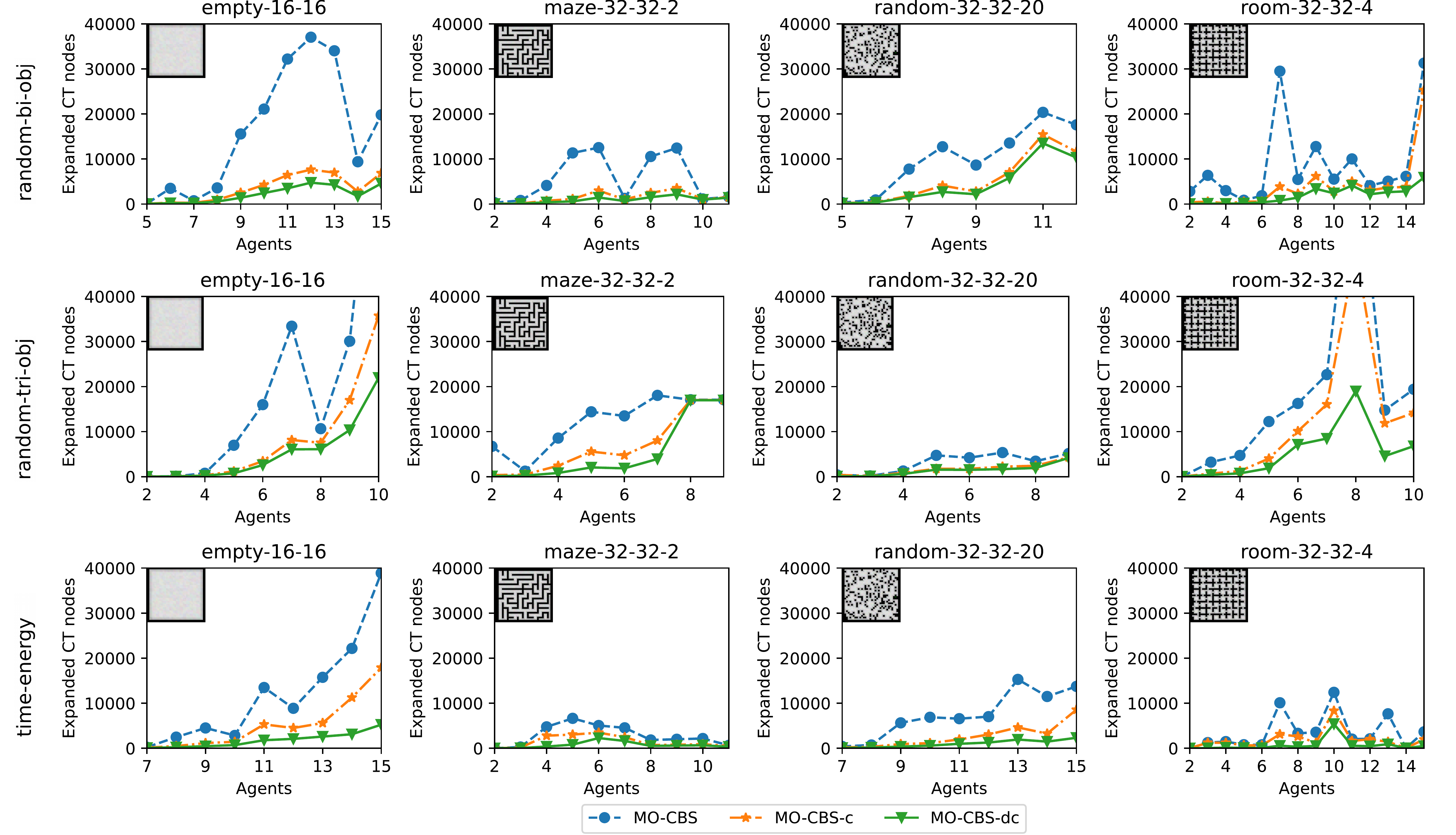}
    \caption{Average node expansion over instances that are solved by all MO-CBS variants on different grid maps with different combinations of objectives. }
    \label{fig:result_all_exp}
\end{figure*}

Table~\ref{tab:branch} shows the average branching factor, where branching factor is the number of child CT nodes that are returned by the  \textsc{Split} function when expanding a CT node, for different combinations of objectives.
In all cases, MO-CBS-dc has smaller branching factors than MO-CBS-c, and they both have smaller branching factors than MO-CBS. 
Comparing \textit{random-bi-obj} and \textit{random-tri-obj}, the reduction of branching factor by cost splitting or disjoint cost splitting becomes much more significant as the number of objectives increases.
This is because, with more objectives, the sizes of cost-unique Pareto-optimal frontiers for each agent is also larger, and cost splitting or disjoint cost splitting becomes more important for reducing numbers of generated child CT nodes.

Figures~\ref{fig:result_all_time} and~\ref{fig:result_all_exp} show the average runtimes and average CT node expansions, respectively, for different algorithms, combinations of objectives, maps, and numbers of agents.
The runtimes and CT node expansions are averaged over all instances solved by all three algorithms. 
Similar to the trend we saw in the branching factor results, in almost all instances, MO-CBS-dc has smaller average runtimes (resp. fewer average CT node expansions) than MO-CBS-c, and they both have smaller average runtimes (resp. fewer average CT node expansions) than MO-CBS.

\bibliography{aaai22}